\newcommand{\beginsupplement}{%
        \setcounter{table}{0}
        \renewcommand{\thetable}{S\arabic{table}}%
        \setcounter{figure}{0}
        \renewcommand{\thefigure}{S\arabic{figure}}%
     }
\title{Why Deep Learning Generalizes}
\author{ \href{https://orcid.org/0000-0003-1661-4579}{\includegraphics[scale=0.06]{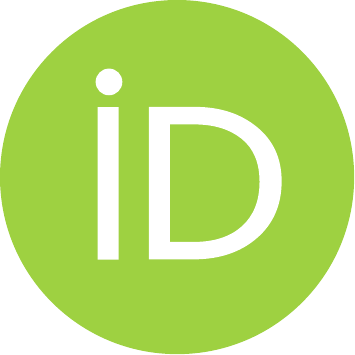}\hspace{1mm}Benjamin L. Badger}\thanks{The author would like to thank Guidehouse for support during the research and writing of this paper.  Code for this work may be found on \url{https://github.com/blbadger/learning-generality}} \\
	Guidehouse \\
	1200 19th St. NW Washington, DC 20036 \\
	\texttt{bbadger@guidehouse.com} \\
}
\date{}
\begin{document}
\maketitle

\begin{abstract}
    Very large deep learning models trained using gradient descent are remarkably resistant to memorization given their huge capacity, but are at the same time capable of fitting large datasets of pure noise.  Here methods are introduced by which models may be trained to memorize datasets that normally are generalized.  We find that memorization is difficult relative to generalization, but that adding noise makes memorization easier.  Increasing the dataset size exaggerates the characteristics of that dataset: model access to more training samples makes overfitting easier for random data, but somewhat harder for natural images.  The bias of deep learning towards generalization is explored theoretically, and we show that generalization results from a model's parameters being attracted to points of maximal stability with respect to that model's inputs during gradient descent.
\end{abstract}

% keywords can be removed
% \keywords{Deep Learning \and Generalization \and Overfitting \and Gradient Descent}

\section{Introduction}

    For many years it was assumed that severely over-parametrized deep learning models would fail to generalize due to overfitting.  This notion was an inspiration for many advances in the field: for example, the convolutional neural network was originally proposed to allow for generalization by applying prior knowledge about natural images, specifically that the first few layers should contain local connections only and the model should be translation-invariant \citep{LeCun1989}.  Convolutional models were thus noted to present a contrast to the fully connected neural network architecture that was prevalent at the time, an architecture which had been shown to be capable of approximating any arbitrary computable function given sufficient model size \citep{hornik1989multilayer}.  As it is extremely unlikely that a randomly chosen function exactly fitting training data generalizes to test data without prior knowledge, it is perfectly reasonable to assume that over-parametrized deep learning models will not generalize.
    
    Subsequently it has come as some surprise that deep learning models do not exhibit the expected behavior with respect to generalization: increasing a model's size (for convolutional or fully connected models alike) far beyond that required to memorize a dataset typically does not lead to overfitting \citep{neyshabur2014} even though large convolutional models are found to be capable of fitting noise with apparent ease \citep{zhang2016}.  This is true even when all extrinsic regularization techniques are abandoned \citep{neyshabur2014, zhang2016}.  These observations have led to the hypothesis that it is not model capacity as it is traditionally understood but some other regularizer that prevents deep learning models from overfitting even when they are capable of doing so \citep{Zhang2021}.
    
    One logical place to look for an model-extrinsic implicit regularizer is in the learning method applied to that model, and therefore role of gradient descent in deep learning regularization is the focus of this work. 
    
    The role of stochastic gradient descent as an implicit regularizer has been proposed to be a function of the ratio of the learning rate to batch size \citep{Smith2021}, although elsewhere it has been observed that regularization does not depend on forming minibatches during gradient descent \citep{Wu2017}.  Elsewhere it has been observed that stochastic gradient descent converges to regions of good generalization (in particular regions containing small gradients of the output with respect to the model's parameters) \citep{Lei2018} for natural language tasks. In related work, the role of gradient descent in realistic scenarios in which parameters are changed via discrete updates has also been investigated \citep{barrett2020implicit}, with the conclusion that discrete updates prevent parameter trajectories from entering regions that fail to generalize and instead bias towards model configurations in which the loss is relatively insensitive to parameter perturbation.  It remains unclear why convergence would occur in regions of generalization even if the learning rate is set to be very small or else becomes very small during training (as typically occurs for adaptive gradient descent-based optimization algorithms used for this work), and why convergence to regions of generalization should occur at all.
    
    Like many models with intractable objective functions, neural networks are usually trained using gradient descent. Gradient-based learning applied to neural networks was for a time viewed with suspicion for its lack of guarantees: given a non-convex objective function typical of nonlinear models like neural networks, gradient descent may or may not converge on a sufficiently small objective value \citep{goodfellow2016deep}. This is because locally optimal solutions are not guaranteed to be globally optimal, but gradient descent only makes locally optimal gradient updates which may lead to the approximation of some sub-optimal critical point.
    
    It should first be noted that for very large models, however, this perspective may be inverted: the inability of gradient descent to reach a gobal minimum in the objective function is one reason why these models tend to resist overfitting, as an over-parametrized model that has reached gobal objective minima is likely to overfit for common classification tasks given sufficient model size.
    
    The more applicable case of overfitting when an objective function global minimum is not reached is considered. This is the case that is normally of greatest interest to deep learning as gradient descent does appear capable of finding configurations that overfit for certain datasets, and is explored here experimentally and theoretically.
    
    Our primary findings may be summarized as follows: for sufficiently large models applied to typically generalizable datasets,
    
    \begin{center}
	\it{Model configurations that severely overfit are difficult to find via gradient descent}
    \end{center}
    
    and it can be shown that  
    
    \begin{center}
	\it{Gradient descent is globally optimal with respect to finding generalizable model configurations}
    \end{center}
    
    Implications of these ideas are discussed, and methods are introduced to measure the tendency of a given dataset in resisting overfitting.
    
\section{Overfitting is Empirically Difficult for Gradient Descent}

    Surprising observations have been made with regards to large deep learning model generalization: first that models are capable of fitting pure noise with little apparent difficulty \citep{arpit2017closer, zhang2016}, that increasing a vision model's size far beyond what is necessary to memorize does not actually induce memorization \citep{neyshabur2014}, and finally that regularization applied extrinsically is not necessary to prevent overfitting for most models \citep{neyshabur2017}.  We focus on the first observation to begin.
    
    A model's hypothesis space is defined as the set of functions that it may approximate, where the learning process is applied in order to select one or some subset of these functions. Here is introduced the notion of a constrained hypothesis space, defined as the set of functions a model can approximate subject to some learning procedure such that the learning procedure succeeds.  One may think of these functions as being the set of model configurations $\theta$ that achieve a sufficiently small objective function value. Changing a model's training dataset typically changes its constrained hypothesis space for some given objective function, assuming that the learning procedure is successful. 
    
    With a dataset that sufficiently limits the constrained hypothesis space, it can be shown that fitting noise can actually occur in fewer epochs than general learning on different datasets \citep{badger2021}.  But these results do not provide evidence for the idea that overfitting is relatively easy in datasets that normally do not lead to significant overfitting, as the constrained hypothesis space is generally different for different datasets such that the task of approximating a member of this space (and thus training difficulty) is generally different between datasets.
    
    Therefore it remains to be shown that a model which typically generalizes on some given dataset is also capable of overfitting that same dataset using gradient descent.  For this work overfitting is defined as the difference between training accuracy and test accuracy, where severe overfitting (also known as memorization) occurs when the training set accuracy approaches unity while the test set accuracy remains near a naive threshold (ie 1/10 for a 10-class set of equal proportion).  
    
    To investigate a model's ability to severely overfit datasets that normally generalize, an atypical objective function denoted in Equation (\ref{eq1}) was constructed where $L(O(a, \theta), y)$ signifies some chosen loss function applied to the output of model $\theta$ given label $y$ and input $a$. 

    \begin{equation}
        J'(O(a, \theta)) = L(O(a_{train}, \theta), y_{train}) - L(O(a_{test}, \theta), y_{test})
        \label{eq1}
    \end{equation}
    
     $J'$ as described in (\ref{eq1}) is an unstable objective function: once symmetry is broken and the right term is smaller than the left, simply increasing the value of all output elements in $O(a, \theta)$ leads to a further decrease in $J'$ but does not increase the difference between training and test set model accuracy.  The model can be regularized to prevent this, but we took the somewhat more direct approach of simply transforming the output such the sum of all output elements is constant.  This is done by passing the output into a Softmax function which such that the model output is described by Equation (\ref{eq1-1}) where $O'$ signifies the raw, untransformed outputs from the model in question.  
    
    \begin{equation}
        O(a, \theta) = \mathbf{softmax} \; O'(a, \theta)
        \label{eq1-1}
    \end{equation} 
    
    The loss function in (\ref{eq1}) is cross-entropy, or more specifically a log Softmax transformation followed by a negative log loss as is standard in recent versions of PyTorch \citep{paszke2019pytorch} based on the original the Torch library \citep{collobert2002torch}. To make the comparisons in this work consistent, we add Softmax layers to all models in this work regardless of whether or not they are applied to the objective function (\ref{eq1}), with the sole exception being the model used in Figure \ref{figs1} (a).
    
    Minimizing the split gradient in Equation (\ref{eq1}) is equivalent to minimizing some objective function $J$ for the model on training data while maximizing that same function on test data. Although the selection of an objective function typically changes the constrained hypothesis space of a given model, observe that any model parameter configuration $\theta$ that minimizes (\ref{eq1}) also minimizes the standard training objective function (\ref{eq2}) assuming that the minimization of (\ref{eq1}) minimizes both left and right terms.
    
    \begin{equation}
        J(O(a, \theta)) = L(O(a_{train}, \theta), y_{train}) 
        \label{eq2}
    \end{equation}
    
    Therefore given some fixed dataset the constrained hypothesis space of (\ref{eq1}) is a proper subset of the constrained hypothesis space of (\ref{eq2}) such that any $\theta$ that minimizes (\ref{eq1}) will minimize (\ref{eq2}), but not every $\theta$ that minimizes (\ref{eq2}) will also minimize (\ref{eq1}). 

    To investigate the ability of severe overfitting within a single constrained hypothesis space, the first 10k examples from CIFAR10 \citep{krizhevsky2009learning} test and training datasets were employed. We apply (\ref{eq1}) to a relatively small 5-layer convolutional network (905,930 parameters with the last layer being fully connected, for more information see the source code) with the adaptive learning algorithm Adam \citep{kingma2014adam} as the optimization technique.  When this is done it is apparent that severe overfitting occurs, in this case with the training accuracy above 90\% and the test accuracy of 2.4\% (Figure \ref{fig1}). Compare this to when the standard gradient (\ref{eq2}) is employed for the same model and dataset, where 90\% training accuracy corresponds to approximately 60\% test accuracy.   
    
    It may be wondered if memorization can be induced without knowledge of a test dataset. This can be tested by observing the ability of gradient splitting in (\ref{eq1}) to induce overfitting by dividing a training dataset into two equal portions and arbitrarily assigning one of these to be the `test' set, and then observing the classification accuracy on the real test set (or on another held-out set of training data). As shown in Figure \ref{fig1}, gradient splitting indeed is capable of inducing overfitting with respect to an unseen dataset 
    
    It is also apparent that many epochs are necessary for overfitting compared to general training.  The number of training steps necessary to reach some threshold, denoted here in epochs, is used as a measure of the `difficulty' of training.  Therefore gradient-based learning is capable of obtaining a model configuration $\theta$ that exhibits severe overfitting in a constrained hypothesis space in which such overfitting normally is exceedingly rare. It is clear from Figure \ref{fig1} that severe overfitting via gradient descent on large, approximately i.i.d. datasets such as CIFAR10 is empirically difficult.  
    
    \begin{figure}[h]
        \centering
        \includegraphics[width=0.85\textwidth]{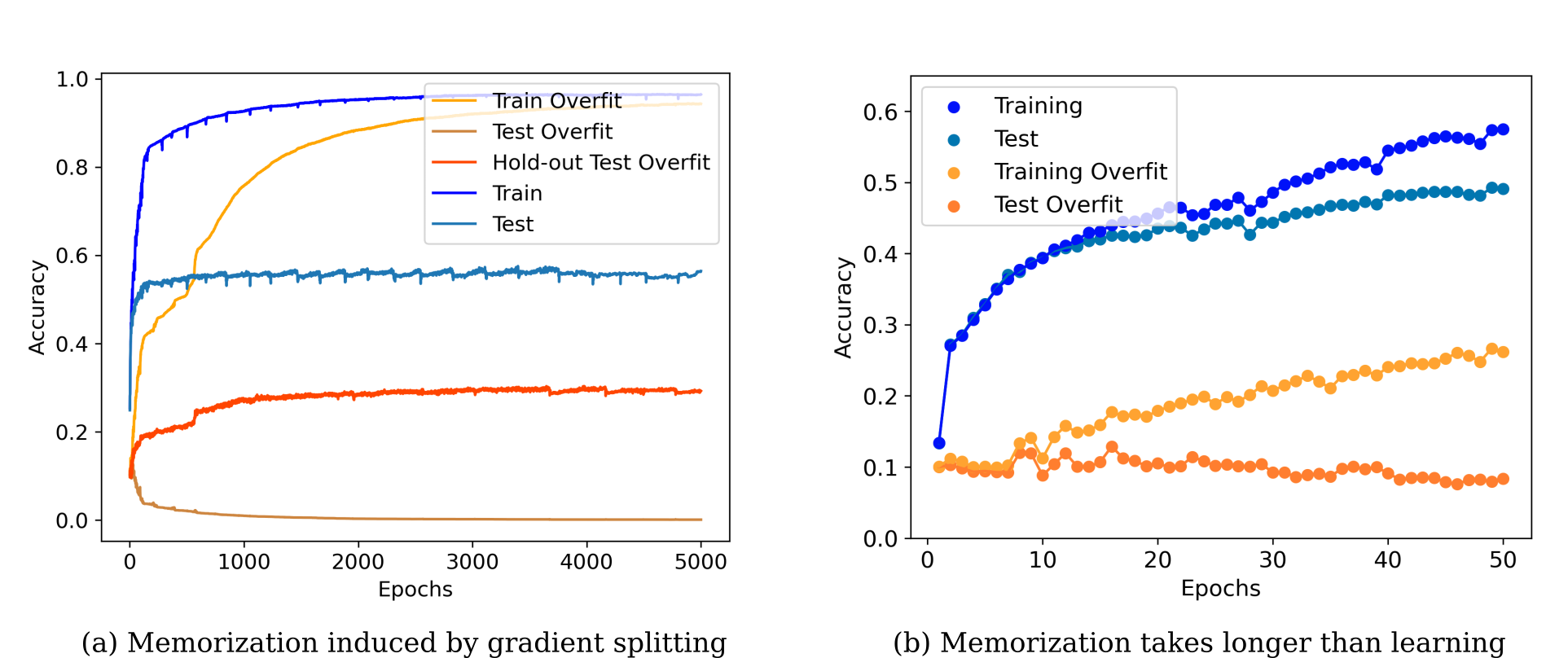}
        \caption{Gradient splitting induces overfitting, but learning to overfit is more difficult than learning to generalizable on CIFAR10 subsets. (a) Gradient splitting with Equation (\ref{eq1}) induces severe overfitting for the first 10k images, lr=2e-4.  (b) Memorization requires many more epochs than generalizable training.}
        \label{fig1}
    \end{figure}
    
    The overfitting induced by Equation (\ref{eq1}) is in a sense artificially severe, and actually achieves anti-learning as the test classification accuracy heads to 0.  A more realistic case of memorization is one in which the test set classification accuracy simply does not increase as training accuracy increases. To achieve this, a similar loss to (\ref{eq1}) is constructed in Equation (\ref{eq4}), where $m(a_{test}, y_{test})$ is a measure of the accuracy of the model on the test set and $k$ is the accuracy threshold desired.
        
    \begin{equation}
      J^0(O(a, \theta)) = 
      \begin{cases}
        L(O(a_{train}, \theta), y_{train}) - L(O(a_{test}, \theta), y_{test}) & \text{if } m(a_{test}, y_{test}) > k \\
        L(O(a_{train}, \theta), y_{train}) & \text{else} \\
      \end{cases}
      \label{eq4}
    \end{equation}
    
    With the assumption that (\ref{eq1}) sends the test set accuracy to 0 more rapidly than the training set accuracy to 1 (which is empirically found to be the case for this work), (\ref{eq4}) is guaranteed to converge on a test set error of $k$ if it is sufficiently small.  Employing this objective function, it can be shown that it is still relatively difficult to severely overfit CIFAR10 (Figure \ref{fig2}).  On the other hand, overfitting pure noise is relatively easy (corroborating previous work \citep{zhang2016}) and indeed the incremental addition of noise decreases the number of epochs required to overfit. 
    
    Next we investigated the relationship of dataset size to the ease of memorization with (\ref{eq4}).  As shown in Figure \ref{fig2} (c), increasing the size of the dataset has opposite effects on the number of epochs required to overfit noise versus CIFAR images: overfitting noise becomes easier, but overfitting natural images is found to be more difficult as the training dataset size increased from 2.5k to 10k samples. 
    
    \begin{figure}[h]
        \centering
        \includegraphics[width=0.9\textwidth]{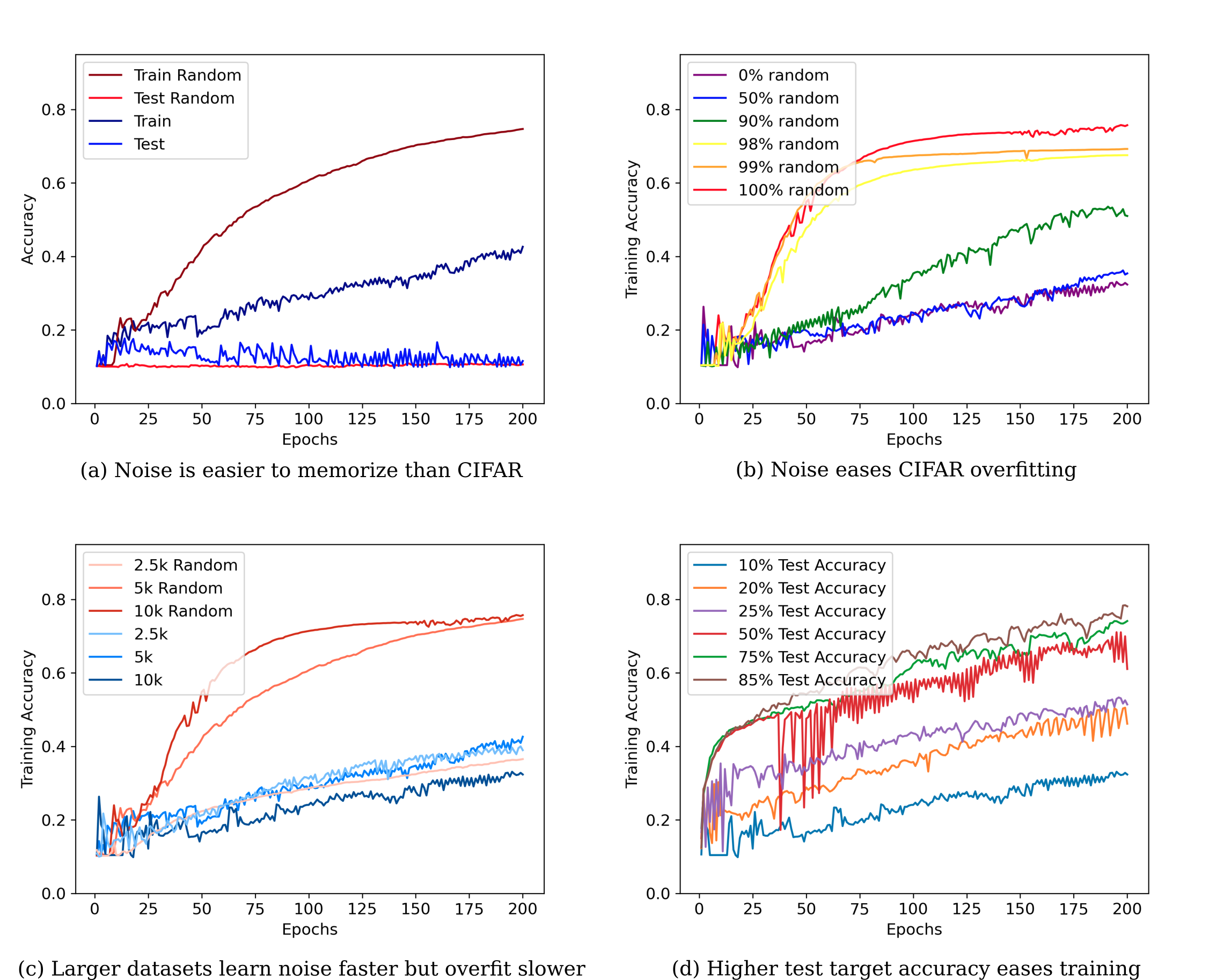}
        \caption{Memorizing noise is easier than memorizing CIFAR10 applying (\ref{eq4}), unless otherwise noted.  Conditionals are evaluated at the start of each epoch. (a) Learning curves for 5k CIFAR10 versus fixed scaled Gaussian noise given $k=0.11$. (b) Learning curves for 10k CIFAR10 mixed with fixed scaled Gaussian noise. (c) Learning curves for datasets of the denoted size (ie 2.5k signifies 2560 images in both training and test datasets) for CIFAR10 versus scaled Gaussian noise.  (d) Learning curves for (\ref{eq5}) with upper and lower limits equal to the target denoted. }
        \label{fig2}
    \end{figure}
    
    The same finding that dataset size has opposite effects on overfitting ease for noise versus real image data is also observed for ResNet18 (Supplementary Figure \ref{figs1}), suggesting that this effect is not limited to a specific model choice.  It is interesting to note that the addition of a Softmax transformation to the output of ResNet18 appears to make memorization of random data rather slow for all dataset sizes, necessitating the removal of this layer for experiments in Figure \ref{figs1} (a).  It is unclear why a Softmax transformation would affect the ability of noise memorization in ResNet18 but not the smaller convolutional model used elsewhere in this work.
    
    With a related construct we introduce the objective function $J^t$ such that the test set error is not sent to the origin or to the naive threshold but instead is sent towards some value via gradient descent. As shown in Equation (\ref{eq5}), the desired lower bound is indicated by $l$ and the desired upper bound on the training accuracy is denoted $u$, where $l \leq u$. Here for simplicity we consider $J^t$ for the case where $u=l$ and find that larger the target test accuracy $u=l$ is, the easier training on CIFAR10 becomes (Figure \ref{fig2} (d)).
    
    \begin{equation}
      J^t(O(a, \theta)) = 
      \begin{cases}
        L(O(a_{train}, \theta), y_{train}) - L(O(a_{test}, \theta), y_{test}) & \text{if } m(a_{test}, y_{test}) > u \\
        L(O(a_{train}, \theta), y_{train}) + L(O(a_{test}, \theta), y_{test}) & \text{if } m(a_{test}, y_{test}) < l \\
      \end{cases}
      \label{eq5}
    \end{equation}
    
    In conclusion, it is observed that a dataset that is normally generalized by a model is capable of being severely overfit by that same model, but that the parameters necessary for memorization are very difficult for gradient descent to find.  We now turn to theoretical work on why this would be the case.

\section{Model configurations corresponding to global objective function minima are not desirable but typically are not found via gradient descent}
    
    For a non-convex loss function, gradient descent typically does not make globally optimal updates to a model's parameters with respect to finding parameters of minimum loss.  In algorithmic terms, gradient descent is a greedy algorithm because it minimizes a cost function for a model's current configuration whether or not the subsequent configuration is any closer to the global minimum for that loss function.  Poor correspondence between local and global structure occurs by definition for non-convex objective functions, such that for some initial value $\theta_0$ there is no guarantee that subsequent configurations $\theta_1, \theta_2, ...$ will approximate global rather than local minima or non-minimal points.
    
    Much work on gradient descent applied to non-convex objective functions is focused on the ability of gradient descent to find some sufficiently small value given the limitations described in the last section.  For small models with fewer parameters than unique training examples, the global minimum of an objective function is a desirable value because it typically maps to a model configuration that has low training error subject to the model's intrinsic bias. For large models the situation is reversed: obtaining a global minimum is not desirable because models of sufficient capacity are capable of nearly perfectly approximating the training data distribution, and each (typically an infinite number due to hidden layer non-identifiability) model configuration mapping to a global minimal objective value has done so. 
    
    It has been observed that overcomplete linear models are capable of avoiding overfitting, but only if certain priors about the data are assumed \citep{zhang2016}.  Parameter norming and output interpolation are two priors that are used in various overcomplete linear models to prevent overfitting.  There are no such priors on typical deep learning models, although (minimum) parameter norming is typically enforced weakly through initialization methods that assign model weights and biases to start near the origin before training if early stopping is implemented.

    We give an intuitive explanation for why global minimization of $J$ is not desirable for the case of input classification.  Deep learning models applied to problems of classification are typically trained with a cost (objective) function of maximum likelihood, perhaps the most common formulation of which being cross-entropy. Maximum likelihood estimation in the context of classification is equivalent to minimization of cross-entropy, where the objective function $J$ to be minimized is given in (\ref{eq6}).
    
    \begin{equation}
        \begin{gathered}
        J(O(a, \theta), y) = - \mathbb{E}_{x \sim p_{data}} \; \log p_{model}(x) \\
        J(O(a, \theta), y) = - \sum_i y_i * \log (O_i(a, \theta))
        \end{gathered}
    \label{eq6}
    \end{equation}
    
    It is clear that minimization of (\ref{eq6}) yields an output vector $y'$ identical to the target $y$ in which case $J(O(a, \theta), y) = 0$. Observe that in the process of generalizable learning we desire some model $\theta$ that yields output $y'$ which necessarily reflects some information about the classification process itself.  But now consider what the model implies by minimizing (\ref{eq6}) for a typical case where $y$ is one-hot encoded: each individual sample is `as far' from every other not of that category with respect to the output.  With real images, however, the generalizable learning process may be thought of as distinguishing not just the correct classification of training inputs but also distinguishing how `similar' each output should be from each other, such that unseen inputs (that are not members of the set of training inputs) may be assigned accordingly.
    
    Next we turn to the case where overfitting occurs in spite of a lack of global objective function minimization, which is more applicable for gradient descent-based deep models.

\section{Why Gradient Descent tends to Generalize}

    Experiments presented here and elsewhere provide evidence for the idea that deep learning models trained via gradient descent are biased towards generalization, and tend to severely overfit only when generalizable learning is impossible or extremely unlikely.  This begs the question: why does gradient descent tend to generalize?

    By comparing the norm of the gradient of the model's parameters $\theta$ while training for generalization versus training on a randomized dataset, it is found that the maximum value of the norm of the model's gradient $||\nabla_{\theta} J(O(a, \theta))||$ is greater for the generalizable $J$ than $J$ which overfits.  But this observation does not provide an explanation for why $\theta$ would head towards generalizable configurations at the start of training, when that gradient is usually small for both generalizable and non-generalizable $J$.

    The question of why deep learning generalizes can be reframed as a question of why parameter configurations $\theta$ that correspond to overfitting are weaker attractors than parameter configurations that correspond to generalizable learning.  To understand this, we can turn to the theory of dynamical equations.
    
    Gradient descent as a dynamical system is expressed as Equation (\ref{eq10}). 
    
    \begin{equation}
    \theta_{n+1} = \theta_{n} - \epsilon * \nabla_{\theta_n} J(O(a, \theta_n))
    \label{eq10}
    \end{equation}
    
    This system is nonlinear as long as the function describing the model $O(a, \theta)$ has at least one hidden layer, regardless of whether nonlinear activations are applied to each layer. Therefore we must apply nonlinear dynamical theory to understand the properties of (\ref{eq10}). 

    \subsection{Small Changes in High-Dimensional Attractor Stability lead to Large Changes in Attractor Basin Volume}

    An attractor in dynamical system theory is some fixed point, in this case $\theta_m$, for which certain initial values of $\theta$ head towards. Nonlinear systems may have critical (i.e. fixed) points where $\theta_{n+1} = \theta_n$, in which case $\nabla_{\theta_n} J(O(a, \theta_n) = 0$, that are not attractors.  All points in some region that head towards an attractive point (which for gradient descent is a minimal point) are termed the `basin of attraction', and the size of the basin of attraction depends on certain characteristics of the attractive point. This means that deep learning models trained via gradient descent may have many minimal points (with respect to the objective function) that will not be arrived at, and other minimal points that they will.
    
    \begin{figure}[h]
        \centering
        \includegraphics[width=0.65\textwidth]{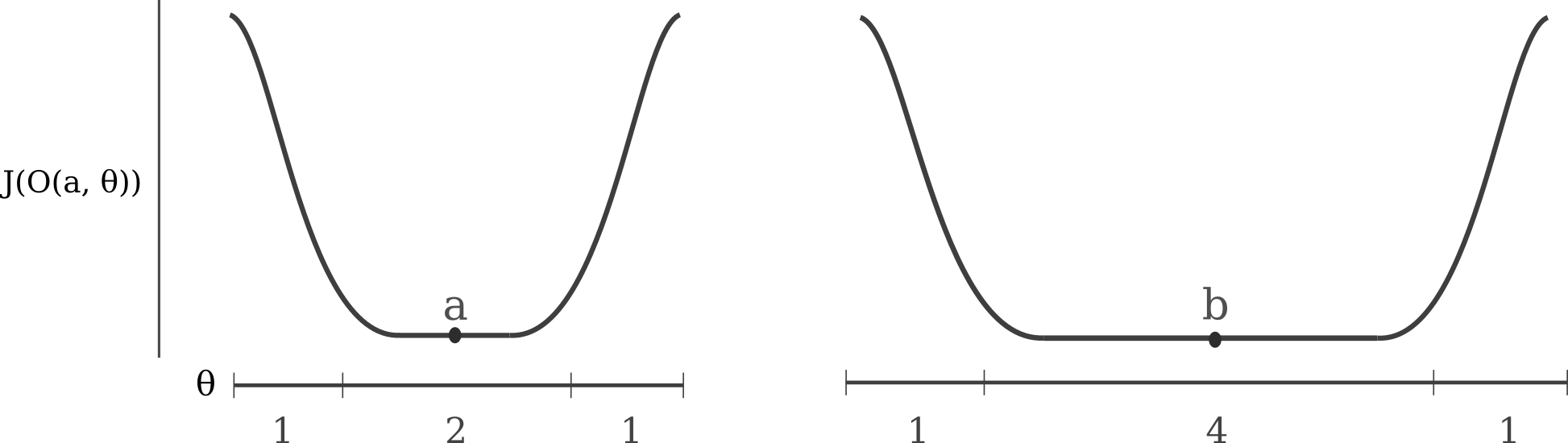}
        \caption{Simplified attractor geometry for the loss function value (vertical axis) for model of one parameter (horizontal axis)}
        \label{fig3}
    \end{figure}
    
    Define the stability of an attractive point to be the minimal amount $\epsilon$ that can be added in any direction to the attractor $\theta_m$ such that the output is within some given value $\delta$. To gain some appreciation for how a small change in stability can lead to a large increase in total attractor basin size, consider two minimal points denoted $a$ and $b$ in Figure \ref{fig3}.  For simplicity, assume that the slope towards the attractive point is the same from any direction and the total measure of the space of that slope is $m + m = 2m$ in one dimension.  Now observe that $b$ is twice as stable with respect to the output objective $J(O(a, \theta))$ than $a$ as measured by the distance needed to perturb a point at the attractor in order to make some small change $\delta$ in that output.  With these assumptions, we can clearly see that in one dimension the ratio of the size of attractor $a$ to $b$ is $4/6 = 2/3$. The same ratio applies to the total size of the stable region in attractor $b$, denoted $s_b$, to the total attractor basin area $s_b + 2m$, ie $s_b / (s_b + 2m) = 2/3$. 
    
    Deep learning models are equivalent to transformations in high-dimensional vector space, so we can gain intuition into the space describing such models by generalizing the case shown in Figure \ref{fig3} to many dimensions while observing the ratios of the sizes of the two attractors $a/b$.  It is clear in this case that in $n$ -dimensional space, the ratio of the size of $a$ to $b$ is $2^n / 3^n$ such that the basin of attraction for $a$ decreases exponentially compared to the basin size for $b$.  To illustrate just how quickly this growth occurs, observe that with a 1000-dimensional space (corresponding to a very small model) the ratio $a/b = (2/3)^{1000} = 1/1.2 * 10^{176}$ such that it is virtually impossible to find basin $a$ with a point that is initialized at random.  
    
    If the basin of attraction of $b$ is so large, it may be wondered how random initialization is not observed to exhibit a model configuration in the $\delta$-stable region $s_b$.  This is because the ratio of the $\delta$-stable region to the entire basin of attraction is $s_b/(s_b + 2m) = a/b = 1/1.2 * 10^{176}$, meaning that there is an astronomically small chance of picking a point at random that exists in $s_b$. Intuitively, this is a result of the fact that almost all the `volume' of a high-dimensional sphere is located near its edges.
    
    It may be wondered if this idealized viewpoint is at all representative of gradient descent applied to deep learning models. There is indeed evidence that it is: such models are normally observed to begin the training procedure in a region of high cost before then experiencing a sharp decline, which is followed by epochs in which the gradient norm decreases.  As an example, the Frobenius norm of the first weight layer of the convolutional model used in this paper $||\nabla_{\theta_1} J(O(a, \theta)) ||_2$ applied to CIFAR10 peaks around epoch 50 at just under 1, before decreasing steadily to around 5e-6. When this is compared to the same norm for the same model trained using (\ref{eq1}), we see that there is a smaller peak (0.1) and a larger end value (5e-3). The tendency of gradient descent applied to deep learning models to end up in regions of small gradient norm has been observed elsewhere \citep{barrett2020implicit, Lei2018}. Note, however, that the value of $||\nabla_{\theta} J(O(a, \theta)) ||_2$ is not strictly applicable to the definition of `stability' given above, the former is defined as the maximum discrete distance in parameter space one can travel before moving a certain distance in loss whereas the latter is simply the size of the maximum `slope' in multidimensional space.  It is easy to imagine a cost function landscape that has high gradient value everywhere but is stable (ie a sawtooth) and likewise an unstable landscape that has relatively small gradient norm.

    In the case above the attractor is convex, and more specifically it is a n-dimensional ball.  More generally, it is clear that small increases in attractor stability (as defined above) lead to large changes in the total `volume' of the basin of attraction regardless of the shape of that basin, and in particular regardless of whether it is convex.  This is due to the Brunn-Minkowski inequality, which relates the Lebesgue measure $\mu$ of combined nonempty compact sets $A, B \in \Bbb R^n$ to the sum of their separate measures and is given in Equation (\ref{eq20}) for reference. The combination (or extension) of $A$ and $B$ is denoted $A + B$ and is defined as the set of all elements that can be obtained by adding any one element of $A$ to one element of $B$.

    \begin{equation}
        \mu(A + B)^{1/n} \geq \mu(A)^{1/n} + \mu(B)^{1/n}
        \label{eq20}
    \end{equation}
    
    For the Brunn-Minkowski inequality to hold as the dimension number $n$ increases for some fixed sets $A, B$ it is apparent that the Lebesgue measure (informally the Euclidean `volume') of the combined set must become extremely large relative to the sum of separate volumes.  Without loss of generality, taking $A$ to be our attractor and $B$ to be a relatively small set corresponding to an extension on $A$ due to increased stability, we find that large dimensionality lead to a large increase in the volume $\mu(A + B)$. 
    
    Intuitively the reason as to why the relation between size and addition for various dimensions holds for arbitrary sets as well as for convex ones is due to isoperimetric principle of the n-dimensional ball.  We have seen that most of the `volume' of an n-dimensional ball (for large $n$) is located near its surface.  Now consider that no set has a smaller surface-to-volume ratio than the ball, and it may be unsurprising that a combination of a small amount to an arbitrary set in $n$ dimensions yields at least as large an increase in size as the addition of a small amount to a ball.

    \subsection{Stability in Parameter Space Corresponds to Stability in the Input Space}

    Local minima that are stable in $\theta$ space therefore have larger basins of attraction than points that are less stable (with stability defined as a scalar amount by which $\theta$ can be perturbed in any given direction without changing the loss value by more than some given amount).  Defining generalization as insensitivity of a model's loss function with respect to `small' changes in the input, this can be shown to imply that these models generalize.  This definition of generalization is one that assumes smoothness but no other prior on the data distribution.

    \topsep=10pt
    \newtheorem{theorem}{Theorem}

    \begin{theorem}
        Stable model configurations generalize.
    \end{theorem}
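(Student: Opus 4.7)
The plan is to exploit an elementary duality between perturbations of the input $a$ and perturbations of the parameters $\theta$ as they enter the model $O(a, \theta)$, and then to invoke the parameter-space stability hypothesis on the equivalent parameter perturbation induced by any input perturbation. Concretely, one calls a configuration $\theta_m$ $(\epsilon, \delta)$-stable when $\lVert \Delta\theta \rVert \leq \epsilon$ implies $\lvert J(O(a, \theta_m + \Delta\theta)) - J(O(a, \theta_m)) \rvert \leq \delta$ for inputs $a$ in the training distribution, and declares the configuration to generalize (in the sense adopted in this subsection) if the analogous inequality holds for small input perturbations $\Delta a$.

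First I would isolate the first affine layer $W_1 a + b_1$ of the network and observe that replacing $a$ by $a + \Delta a$ produces the same pre-activation as replacing $W_1$ by $W_1 + \Delta W_1$ with the rank-one choice $\Delta W_1 = W_1 \Delta a \, a^{T} / \lVert a \rVert^{2}$. A direct calculation gives $\lVert \Delta W_1 \rVert_F \leq \lVert W_1 \rVert_{\mathrm{op}} \, \lVert \Delta a \rVert / \lVert a \rVert$, so that a bound of $\epsilon$ on admissible parameter perturbations translates into a bound of $\epsilon \lVert a \rVert / \lVert W_1 \rVert_{\mathrm{op}}$ on admissible input perturbations. Since the perturbed networks agree downstream of the first layer, the loss value is identical between the ``input-perturbed'' and the ``parameter-perturbed'' forward pass, and the stability assumption on $\theta_m$ immediately yields $\lvert J(O(a + \Delta a, \theta_m)) - J(O(a, \theta_m)) \rvert \leq \delta$.

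Next I would make this bound uniform: assuming inputs lie in a bounded region (as is standard for normalized image data), the ratio $\lVert a \rVert / \lVert W_1 \rVert_{\mathrm{op}}$ is bounded below, so the induced input-stability radius is bounded below uniformly in $a$. Combined with the previous subsection's result that configurations with larger stability radius enjoy exponentially larger basins of attraction under gradient descent, this lets me conclude that gradient descent preferentially lands on configurations whose loss is insensitive to input perturbations, which is the paper's working definition of generalization.

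The main obstacle is the input-dependence of the equivalence: the rank-one perturbation needed to match $\Delta a$ scales like $1/\lVert a \rVert$, so inputs near the origin force a worse trade-off between the assumed parameter-stability radius $\epsilon$ and the effective input-stability radius. A clean statement therefore requires either explicit input normalization or a restriction to generic inputs bounded away from zero. A secondary subtlety is that the above argument controls the loss only on training inputs; to promote input-space stability to genuine generalization on unseen test inputs one must additionally invoke the smoothness prior, noted earlier in the subsection, that test points lie close in the ambient input metric to training points.
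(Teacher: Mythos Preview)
Your argument is in the same spirit as the paper's: both exploit the first-layer duality between perturbing the input and perturbing the weights so that the pre-activation, and hence everything downstream, is unchanged. The paper, however, runs the correspondence in the opposite direction: it starts from a parameter shift $c$ and uses layer inversion $a = W^{-1}(f^{-1}(Y) - b)$ to exhibit an input shift $d$ with $J(O(a,\theta+c)) = J(O(a+d,\theta))$, then transports the stability inequality across this equality. Your rank-one construction $\Delta W_1 = W_1\,\Delta a\, a^{\top}/\lVert a\rVert^{2}$ goes from $\Delta a$ to $\Delta\theta$, which is the direction actually needed to conclude that \emph{every} small input perturbation is controlled by the parameter-stability hypothesis; the paper's direction, read literally, only shows that the particular $d$ induced by a given admissible $c$ is harmless, and relies on an informal ``the larger the $c$-region, the larger the $d$-region'' claim to close the gap. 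Your version additionally delivers an explicit quantitative trade-off $\epsilon\,\lVert a\rVert/\lVert W_1\rVert_{\mathrm{op}}$ between the two stability radii, whereas the paper leaves this relationship qualitative and appeals to invertibility of $W$ (or to multiplicity of pre-images when $W$ is singular or the activation is non-injective). Both arguments share the caveats you flag about inputs near the origin and about promoting training-point input-stability to genuine test-set generalization via a smoothness prior; the paper handles the latter informally through a separate manifold-contraction sketch rather than through the metric closeness you invoke.
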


    \begin{proof}
    Assume that gradient descent can be expressed by Equation (\ref{eq10}), and that $\nabla_{\theta_n}$ is some continuous function on the model output. Assume $J(O(a, \theta))$ signifies a continuous objective function $f: \Bbb R^n \to \Bbb R$ where $\theta$ is a single-layer (no hidden elements) fully connected neural network with continuous activation. We define a `stable' output with respect to the model's parameters in Equation (\ref{eq11}), where $c$ is some tensor of dimension matching $\theta$ composed of constants and $\delta$ is some tensor of sufficiently small constants.  The larger the norm of $c$ for some fixed $\delta$, the more stable the output with respect to the model's parameters.
    
    \begin{equation}
        J(O(a, \theta)) - J(O(a, \theta + c)) < \delta
        \label{eq11}
    \end{equation}
    
    Consider the following: there are two arguments that determine the value of $O(a, \theta)$ and therefore by a change of variables we can convert a small change in $\theta$ while keeping $a$ constant into a change in $a$ while keeping $\theta$ constant (assuming that $O$ and $J$ are continuous functions). 
    
    Specifically, the case of adding a constant scalar $x$ to some input element $e \in a$ is equivalent to adding $c/l$ to all weights from that input element $e$ to the first layer of that model, where $l$ is the number of elements in the first layer.  There is in this case an (not necessarily unique) input $a'$ such that $O(a, \theta') = O(a', \theta)$.
    
    Given the transformation $f(Wa + b) = Y$ on the input $a$ given bias vector $b$ and an invertible weight matrix $W$ and some continuous nonlinear invertible function $f$, the value of an input necessary to match some $Y$ is given in (\ref{eq12})
    
    \begin{equation}
        a = W^{-1}(f^{-1}(Y) - b)
        \label{eq12}
    \end{equation}
    
    If $W$ is non-invertible (which is commonly the case in typical deep learning settings) or a non-unique function such as ReLU is applied to layer activations, then many possible inputs $a_1, a_2, a_3, ...$ can be used to give $Y$.  In either case, we find a value of some tensor $d$ that satisfies Equation (\ref{eq13}).
    
    \begin{equation}
        \forall \theta, \; \exists d : J(O(a, \theta + c)) = J(O(a + d, \theta))
        \label{eq13}
    \end{equation}
    
    For a stable output $O(a, \theta)$ with respect to parameters $\theta$ while keeping the input $a$ constant, that output is also stable to changes in the input $a$ while keeping the model parameters $\theta$ constant as the inequality in Equation \ref{eq14} holds. 
    
    \begin{equation}
        J(O(a, \theta)) - J(O(a, \theta + c)) = J(O(a, \theta)) - J(O(a + d, \theta)) < \delta
        \label{eq14}
    \end{equation}
    
    Therefore stability of the output's loss with respect to the model's parameters for some dataset is equivalent to stability of the output's loss with respect to the model's input given fixed parameters, meaning the given definition of model generalization has been fulfilled. Furthermore, the larger the $c$ -region around $\theta$ such that (\ref{eq11}) holds the larger the $d$-region around $a$ such that the same inequality is observed, assuming that the linear transformation $O(a, \theta)$ has no zero eigenvalues.

    The special case of a one-hidden-layer fully connected architecture is clearly extendable to the more general case of neural networks without fully connected layers (because such layers are usually equivalent to a subset of fully connected layers) as well as models with many sequential layers simply by inverting each layer in succession and finding the input or inputs that yield some identical output. 
    \end{proof}
    
    The definition of generalization used above leaves something to be desired, and in particular it does not address cases in which test set members are not `close' to any given training input as measured by a metric such as $L^2$, but are `close' in that they have some abstract invariance. We sketch an informal proof that stable model configurations $\theta$ may be shown to generalize according to this definition as well, which the reader can clearly see is analagous the the proof above and may be formalized in a similar manner.
    
    Assume that some model maps inputs to a low-dimensional manifold and that this mapping is contractive (both assumptions having experimental evidence for vision models currently in use \citep{badger2022}) and that the model output $O(a, \theta)$ arrived at by gradient descent is stable with respect to the model's parameters. Then some small shift in the model's parameters $c$ given some input $a$ corresponds to a small shift in the output of that model (denoted $b$) such that $||O(a, \theta + c) - (O(a, \theta) + b) || < \delta$. The input $a$ is mapped to some low-dimensional manifold in the output space such that the shifted output is in the $\delta$-neighborhood of that manifold. As the model's mapping is assumed to be contractive in the forward direction, a small shift in the output yields a much larger shift in the input, or in other words $d$ such that $|| O(a + d, \theta) - O(a, \theta) || < \epsilon $ implies $|| a + d - a || > \epsilon$ for some small constant $\epsilon$, which fulfills informally the required lack of `closeness' in the input.  But as the output is still near the manifold, the corresponding input still yields low loss assuming that the manifold is accurate.  This manifold itself is the abstract invariance that determines `closeness' between inputs, and therefore stable model configurations also generalize with respect to larger changes on the input as long as these are approximately mapped to the correct manifold.
    
    The former generalization definition (assuming only smoothness) implies approximately-continuous transformations of the input whereas the latter implies discrete transformations. If a few simplifying assumptions on a model are made, it can furthermore be said that small changes in early layer parameters are equivalent to near-continuous shifts in the input whereas small changes in deep layer parameters correspond to discrete input shifts.
    
    Thus the points of the model space that are attractive to $\theta$ during gradient descent are exactly those that are general, at least when the transformations given by $a + d$ corresponding to allowable $\theta + c$ that exist in the training set also exist in the test set. The equivalency between changes in $\theta$ and changes in $a$ or $O(a, \theta)$ are determined by the model architecture, with more expressive models typical of deep learning approaches being capable of equating not just changes in pixel values (as in the example above) but also abstract objects.  The idea that a function exists which maps large changes in $a$ (as measured by a metric such as $L^2$) to certain small changes in some low-dimensional subspace of $O(a, \theta)$ is equivalent to the manifold learning hypothesis \citep{bengio2012unsupervised}.

    For the discrete definition of stability, it is worth noting the connection between the subset of possible values of $c$ in Equation (\ref{eq14}) and the possible values of $d$ that are present in the dataset: it is valid to view $d$ as differences between images of one class that are present in the training dataset for which the output $O(a + d, \theta)$ depends.  Therefore the $c$ that $\theta$ learns to become insensitive to is a representation of the intra-class differences for the model as observed by the training set.  In the case where the test set does not resemble the training set (as for fixed random inputs), $O(a, \theta + c)$ is not stable for changes in $d$ that are typical of those necessary to transform $a_{train} \to a_{test}$, or different examples of $a_{test} \to a_{test}$.
    
    This is important to understanding the presence of adversarial negatives which at first glance seem to contradict Theorem 1.  But upon close inspection they do not because the only transformation represented by $d$ that the model becomes insensitive to are those learned by the dataset, or in other words the same transformations that exist between examples $a$ of each class. Adversarial examples are of extremely low probability in the input space and therefore are by definition poor examples of the transformations $d$ between input examples.
    
\section{Implications}

    In summary, we provide theory on why deep learning tends to avoid memorization: when trained via gradient descent, models head towards attractive points in parameter space that are more stable with respect to those parameters, which is equivalent to attractive points in input space given fixed parameters.  
    
    Our theoretical exploration also provides explanations for a number of somewhat confusing observations with respect to deep learning generality.  The observation that increasing a deep model's size does not increase overfitting for a sufficiently large dataset \citep{neyshabur2014} is to be expected given than a large model would only be capable of approximating a greater number of possible transformations on the input compared to a small one, but gradient descent would still be expected to result in a generalizable configuration of parameters.  The theory also explains why fitting noise takes potentially fewer epochs than fitting real data: in the absence of some faraway (compared to the initial configuration $\theta_0$) strong attractor, weak and relatively unstable nearby attractors are instead found. Additional evidence for this idea comes when one observes the lack of stability of model configurations that overfit. Overfitting occurs when either no sufficiently strong attractor can be found, or more commonly when the transformations on the training set inputs that are stable for some model's configurations do not correspond to the changes between training and test set which is equivalently the case if the manifold learned during training does not accurately match the manifold of the test data.
    
    We find support for the notion that generalization is understood in the context of the dataset a given model is applied to \citep{arpit2017closer} being that gradient descent-based deep learning has the potential to generalize implicitly, given sufficient capacity to form an accurate embedding.

    Implicit regularization from gradient descent does not preclude the notion that certain deep learning architectures are more or less susceptible to memorization.  The finding that transformer architectures appear to be capable of slightly more generalization after extensive pre-training than similar convolutional models \citep{dosovitskiy2020image} suggests that there are indeed differences in the ability of various models to generalize, but in general it remains unclear why this would be.
    
    The methods used in this work to measure the ease of overfitting may also be applied to estimate a new dataset's quality.  For a given model if there is an insufficiently large gap between the number of epochs required for learning via (\ref{eq1}) versus overfitting (\ref{eq2}) then the dataset may be prone to memorization when trained on various deep learning models.  

\section{Conclusion}

    Gradient descent-based learning procedures are at the heart of most of deep learning today as they allow for the optimization of intractable objective functions. Important to this approach's success when applied to very large models capable of memorizing the training data is the globally suboptimal nature of gradient updates with respect to minimizing the chosen loss function.  This lack of global optimization is observed to be paired with an empirical difficulty of finding model parameters that severely overfit training data, assuming an approximation of the correct manifold describing the data is found by the model. Our work gives an explanation as to why attractive points of $\theta$ generalize when the test data contains the same invariants as the training data. We present a perspective in which gradient descent, though accurately described as a greedy and generally non-optimal algorithm with respect to global objective function minimization, is in fact (assuming random initialization) globally optimal with respect to the stability of the eventual model configuration upon convergence.  
    
\bibliographystyle{unsrtnat}
\bibliography{references}  %%% Uncomment this line and comment out the ``thebibliography'' section below to use the external .bib file (using bibtex) .

\beginsupplement
\section{Appendix}

    Code for this work may be found at \url{https://github.com/blbadger/learning-generality}. Experiments were performed on Colab, and models are trained using PyTorch's version of Cross-Entropy, such that the output of the model is log-Softmax transformed before the negative log-likelihood is computed.  As a Softmax layer exists in the models in this work, this procedure effectively Softmax-transforms twice in succession. Learning rates are all 1e-4 except where otherwise noted (Figures \ref{fig1} (a), \ref{figs1}).  
    
    For the CIFAR10 datasets, the first 2,560 test and train samples are termed the `2.5k' dataset, whereas the first 5120 samples are termed the `5k' and the first 10,000 test and 10,240 training samples are referred to as the `10k' dataset.  For Figure \ref{figs1}, test accuracy is subtracted from training accuracy due to the tendency of ResNet18 to experience an initial increase in test accuracy when (\ref{eq4}) is applied.
    
    \begin{figure}[h]
        \centering
        \includegraphics[width=0.95\textwidth]{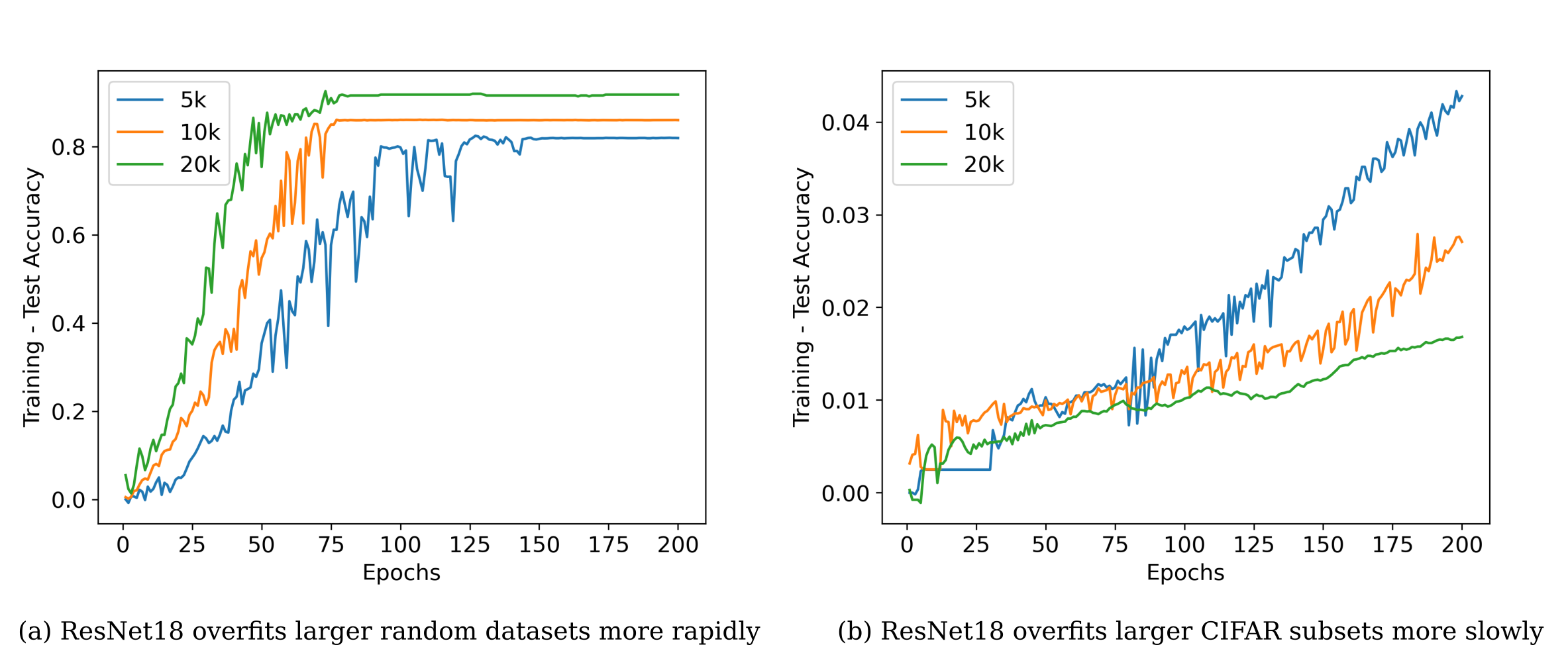}
        \caption{Dataset size increases has opposite effects for the ease of overfitting for noise versus CIFAR10 subsets with ResNet18 applied to (\ref{eq4}), $k=0.11$. Note that CIFAR10 `test' and `training' sets are equal-sized subsets of the training set. (a) Overfitting curves (training accuracy - test accuracy) for random Gaussian datasets of various sizes, learning rate of 1e-4 and no Softmax added. (b) Overfitting curves for CIFAR10 subsets, lr=5e-6 with a Softmax transformation on the output.}
        \label{figs1}
    \end{figure}
    
\end{document}